\documentclass{article}

\usepackage{graphicx}
\usepackage[hidelinks,linktocpage]{hyperref}  
\hypersetup{hypertexnames=false}
\usepackage{lmodern}
\usepackage[left=3.5cm,right=3.5cm,top=3.5cm,bottom=3.5cm]{geometry} 
\usepackage{booktabs}                      

\usepackage{amsthm}
\usepackage{amsmath}
\usepackage{amsfonts}
\usepackage{dsfont}                           
\usepackage{stix}                               
\usepackage{nicefrac}                      
\usepackage{enumerate}    
\usepackage{enumitem}

\usepackage{xcolor}


\usepackage{cite}

\usepackage[title]{appendix}



\makeatletter
\let\amstexbig\big
\def\newbig#1{%
  \ifx#1|%
    \expandafter\@firstoftwo
  \else
    \expandafter\@secondoftwo
  \fi
  {\big@bar}%
  {\amstexbig{#1}}%
}
\AtBeginDocument{\let\big\newbig}
\def\big@bar{\bBigg@{1.1}|}
\makeatother


\def\SX{\mathscr{X}}


\def\CD{\mathcal{D}}
\def\CE{\mathcal{E}}
\def\CG{\mathcal{G}}
\def\CH{\mathcal{H}}

\def\CO{\mathcal{O}}

\def\CS{\mathcal{S}}


\def\BE{\mathbb{E}}

\def\BI{\mathbb{I}}

\def\BN{\mathbb{N}}

\def\BR{\mathbb{R}}

\def\BfC{\mathbf{C}}
\def\BfK{\mathbf{K}}

\def\BfW{\mathbf{W}}
\def\BfX{\mathbf{X}}
\def\BfY{\mathbf{Y}}

\def\BfB{\mathbf{B}}


\def\rmd{\mathrm{d}}
\def\rml{\mathrm{l}}
\def\rmr{\mathrm{r}}

\def\smc{\boldsymbol{c}}




\def\Frob{\mathrm{F}}
\def\seco{\mathrm{s}}


\DeclareMathOperator{\vspan}{span}

\theoremstyle{plain}
\newtheorem{theorem}{Theorem}[section]

\theoremstyle{definition}
\newtheorem{remark}{Remark}[section]
\newcommand{\fin} {\hfill\hbox{$\triangleleft$}}

\title{Local optimisation of Nystr\"om samples\\
through stochastic gradient descent}

\author{Matthew \textsc{Hutchings}\footnotemark[2]\,  \footnotemark[4] \and Bertrand \textsc{Gauthier}\footnotemark[3]\,  \footnotemark[4]}

\begin{document}

\footnotetext[2]{HutchingsM1@cardiff.ac.uk}
\footnotetext[3]{GauthierB@cardiff.ac.uk}
\footnotetext[4]{Cardiff University, School of Mathematics\\ \hspace*{1.4em}\hspace*{1ex}Abacws, Senghennydd Road, Cardiff, CF24 4AG, United Kingdom}

\maketitle

\begin{abstract}
We study a relaxed version of the column-sampling problem for the Nystr\"om approximation of kernel matrices, where approximations are defined from multisets of landmark points in the ambient space; such multisets are referred to as Nystr\"om samples.
We consider an unweighted variation of the radial squared-kernel discrepancy (SKD) criterion as a surrogate for the classical criteria used to assess the Nystr\"om approximation accuracy; in this setting, we discuss how Nystr\"om samples can be efficiently optimised through stochastic gradient descent.
We perform numerical experiments which demonstrate that the local minimisation of the radial SKD yields
Nystr\"om samples with improved Nystr\"om approximation accuracy.
\end{abstract}
\textbf{Keywords:} Low-rank matrix approximation; Nystr\"om method; reproducing kernel Hilbert spaces;\\ stochastic gradient descent.

\section{Introduction}
\label{sec:Intro}

In Data Science, the Nystr\"om method refers to a specific technique for the low-rank approximation of symmetric positive-semidefinite (SPSD) matrices; see e.g. \cite{drineas2005nystrom, kumar2012sampling, wang2016towards, gittens2016revisiting, derezinski2020improved}. Given an $N \times N$ SPSD matrix $\BfK$, with $N\in\BN$, the Nystr\"om method consists of selecting a sample of $n\in\BN$ columns of $\BfK$, generally with $n\ll N$, and next defining a low-rank approximation $\hat{\BfK}$ of $\BfK$ based on this sample of columns. More precisely, let $\smc_{1},\cdots,\smc_{N}\in\BR^{N}$ be the columns of $\BfK$, so that $\BfK=(\smc_{1}|\cdots|\smc_{N})$, and let $I=\{i_{1}, \cdots, i_{n}\} \subseteq \{1, \cdots, N\}$ denote the indices of a sample of $n$ columns of $\BfK$ (note that $I$ is a multiset, i.e. the indices of some columns might potentially be repeated). Let $\mathbf{C}=(\smc_{i_{1}}|\cdots|\smc_{i_{n}})$ be the $N \times n$ matrix defined from the considered sample of columns of $\BfK$, and let $\mathbf{W}$ be the $n \times n$ principal submatrix of $\BfK$ defined by the indices in $I$, i.e. the $k, l$ entry of $\BfW$ is $[\BfK]_{i_{k},i_{l}}$, the $i_{k}, i_{l}$ entry of $\BfK$. The Nystr\"om approximation of $\BfK$ defined from the sample of columns indexed by $I$ is given by
\begin{equation}
\label{eq:NystromApproxColumnSamp}  
\hat{\BfK} = \BfC\BfW^{\dag}\BfC^{T}, 
\end{equation}
with $\BfW^{\dag}$ the Moore-Penrose pseudoinverse of $\BfW$. The column-sampling problem for Nystr\"om approximation consists of designing samples of columns such that the induced approximations are as accurate as possible (see Section \ref{sec:Nystrom-accuracy} for more details).

\subsection{Kernel Matrix Approximation}
\label{sec:kernelMatApprox}

If the initial SPSD matrix $\BfK$ is a kernel matrix, defined from a SPSD kernel $K$ and a set or multiset of points $\CD=\{x_{1},\cdots,x_{N}\}\subseteq\SX$ (and with $\SX$ a general ambient space), i.e. the $i, j$ entry of $\BfK$ is $K(x_{i},x_{j})$, then a sample of columns of $\BfK$
is naturally associated with a subset of $\CD$; more precisely, a sample of columns $\{\smc_{i_{1}},\cdots,\smc_{i_{n}}\}$, indexed by $I$, naturally defines a multiset $\{x_{i_{1}},\cdots,x_{i_{n}}\}\subseteq\CD$, so that the induced Nystr\"om approximation can in this case be regarded as an approximation induced by a subset of points in $\CD$. Consequently, in the kernel-matrix framework, instead of relying only on subsets of columns, we may more generally consider Nystr\"om approximations defined from a multiset $\CS\subseteq\SX$. Using matrix notation, the Nystr\"om approximation of $\BfK$ defined by a subset $\CS=\{s_{1},\cdots,s_{n}\}$ is the $N\times N$ SPSD matrix $\hat{\BfK}(\CS)$, with $i, j$ entry
\begin{equation}
\label{eq:NystromApproxColumnSampKernel}  
\big[\hat{\BfK}(\CS)\big]_{i,j} = \mathbf{k}^{T}(x_{i})\mathbf{K}_{\mathcal{S}}^\dagger\mathbf{k}(x_j),
\end{equation}
where $\mathbf{K}_{\mathcal{S}}$ is the $n \times n$ kernel matrix defined by the kernel $K$ and the subset $\CS$, and where
\[
\mathbf{k}(x)=\big(K(x, s_{1}), \cdots, K(x, s_{n})\big)^T\in\mathbb{R}^{n}.
\]
We shall refer to such a set or multiset $\CS$ as a \textit{Nystr\"om sample}, and to the elements of $\CS$ as \textit{landmark points}; the notation $\hat{\BfK}(\CS)$ emphasises that the considered Nystr\"om approximation of $\BfK$ is induced by $\CS$. As in the column-sampling case, the landmark-point-based framework naturally raises questions related to the characterisation and the design of efficient Nystr\"om samples $\CS$ (i.e. leading to accurate approximations of $\BfK$). As an interesting feature, Nystr\"om samples of size $n$ may be regarded as elements of $\SX^{n}$, and if the underlying set $\SX$ is regular enough, they might be directly optimised on $\SX^{n}$; the situation we consider in this work corresponds to the case $\SX=\BR^{d}$, with $d\in\BN$, but $\SX$ may more generally be a differentiable manifold. 

\begin{remark}
\label{rem:ApproxRkhsSub}  
If we denote by $\CH$ the reproducing kernel Hilbert space (RKHS, see e.g. \cite{berlinet2004reproducing, paulsen2016introduction}) of real-valued functions on $\SX$ associated with $K$, we may then note that the matrix $\hat{\BfK}(\CS)$ is the kernel matrix defined by $K_{S}$ and the set $\CD$, with $K_{S}$ the reproducing kernel of the subspace 
 \[
 \CH_{S}=\vspan\{k_{s_1},\cdots,k_{s_{n}}\}\subseteq\CH, 
\]
where, for $t\in\SX$, the function $k_{t}\in\CH$ is defined as $k_{t}(x)=K(x,t)$, for all $x\in\SX$.
\fin
\end{remark}

\subsection{Assessing the Accuracy of Nystr\"om Approximations}
\label{sec:Nystrom-accuracy}

In the classical literature on the Nystr\"om approximation of SPSD matrices, the accuracy of the approximation induced by a Nystr\"om sample $\CS$ is often assessed through the following criteria:
\begin{enumerate}[label={(C.\arabic*)}]   
\itemsep0em
 \item  
 	$\big\|\BfK - \hat{\BfK}(\CS)\big\|_{*}$, with $\|.\|_{*}$ the trace norm; \label{it:CrtiTrace}
\item 
	$\big\|\BfK - \hat{\BfK}(\CS)\|_{\Frob}$,  with $\|.\|_{\Frob}$ the Frobenius norm; \label{it:CritFrob}
\item 
	$\big\|\BfK - \hat{\BfK}(\CS)\big\|_{2}$, with $\|.\|_{2}$ the spectral norm. \label{it:CritSpec}
\end{enumerate}
Although defining relevant and easily interpretable measures of the approximation error, these criteria are relatively costly to evaluate. Indeed, each of them involves the inversion or pseudoinversion of the kernel matrix $\BfK_{\CS}$, with complexity $\CO(n^{3})$. The evaluation of the criterion \ref{it:CrtiTrace} also involves the computation of the $N$ diagonal entries of $\hat{\BfK}(\CS)$, leading to an overall complexity of $\CO(n^{3}+Nn^{2})$. The evaluation of \ref{it:CritFrob} involves the full construction of the matrix $\hat{\BfK}(\CS)$, with an overall complexity of $\CO(n^{3}+n^{2}N^{2})$, and the evaluation of \ref{it:CritSpec} in addition requires the computation of the largest eigenvalue of an $N\times N$ SPSD matrix, leading to an overall complexity of $\CO(n^{3}+n^{2}N^{2}+N^{3})$. If $\SX=\BR^{d}$, then the evaluation of the partial derivatives of these criteria (regarded as maps from $\SX^{n}$ to $\BR$) with respect to a single coordinate of a landmark point has a complexity similar to the complexity of evaluating the criteria themselves. As a result, a direct optimisation of these criteria over $\SX^{n}$ is intractable in most practical applications.  

\subsection{Radial Squared-Kernel Discrepancy}
\label{sec:RadialSkd}

As a surrogate for the criteria \ref{it:CrtiTrace}-\ref{it:CritSpec}, and following the connections between the Nystr\"om approximation of SPSD matrices, the approximation of integral operators with SPSD kernels and the kernel embedding of measures,
we consider the following \textit{radial squared-kernel discrepancy} criterion (radial SKD, see \cite{Gauthier2018SKD, Gauthier2021SKD}), denoted by $R$ and given by, for $\CS=\{s_{1},\cdots,s_{n}\}$, 
\begin{equation}
\label{eq:RadialSKD}
R(\CS) =
\|\BfK\|_{\Frob}^{2} - \frac{1}{\|\BfK_{\CS}\|_{\Frob}^{2}} \bigg( \sum_{i=1}^{N} \sum_{j=1}^{n} K^{2}(x_i, s_j) \bigg)^{2},
\text{ if $\|\BfK_{\CS}\|_{\Frob}>0$}, 
\end{equation}
and $R(\CS) =\|\BfK\|_{\Frob}^{2}$ if $\|\BfK_{\CS}\|_{\Frob}=0$;  
the notation $K^{2}(x_{i}, s_{j})$ stands for $\big(K(x_{i}, s_{j})\big)^{2}$. We may note that $R(\CS)\geqslant 0$. 
In \eqref{eq:RadialSKD}, the evaluation of the term $\|\BfK\|_{\Frob}^{2}$ has complexity $\CO(N^{2})$; nevertheless, this term does not depend on the Nystr\"om sample $\CS$, and may thus be regarded as a constant. The complexity of the evaluation of the term $R(\mathcal{S})-\|\mathbf{K}\|_{\Frob}^{2}$, i.e. of the radial SKD up to the constant $\|\BfK\|_{\Frob}^{2}$, is $\CO(n^{2}+nN)$, and the same holds for the complexity of the evaluation of the partial derivative of $R(\CS)$ with respect to a coordinate of a landmark point, see equation \eqref{eq:radial-SKD-true-partial-pos} below. We may in particular note that the evaluation of the radial SKD criterion or its partial derivatives does not involve the inversion or pseudoinversion of the $n\times n$ matrix $\BfK_{\CS}$.

\begin{remark}
\label{rem:AboutRadialSKD}  
From a theoretical standpoint, the radial SKD criterion measures the distance, in the Hilbert space of all Hilbert-Schmidt operators on $\CH$, between the integral operator corresponding to the initial matrix $\BfK$, and the projection of this operator onto the subspace spanned by an integral operator defined from the kernel $K$ and a uniform measure on $\CS$. The radial SKD may also be defined for non-uniform measures, and the criterion in this case depends not only on $\CS$, but also on a set of relative weights associated with each landmark point in $\CS$; in this work, we only focus on the uniform-weight case.
See \cite{Gauthier2018SKD, Gauthier2021SKD} for more details. 
\fin
\end{remark}

The following inequalities hold: 
\[
\big\|\BfK-\hat{\BfK}(\CS)\big\|_{2}^{2}
\leqslant \big\|\BfK - \hat{\BfK}(\CS)\big\|_{\Frob}^{2}
\leqslant R(\CS)
\leqslant \|\BfK\|_{\Frob}^{2},
\quad\text{and}\quad
\frac{1}{N}\big\|\BfK - \hat{\BfK}(\CS)\big\|_{*}^{2}
\leqslant \big\|\BfK - \hat{\BfK}(\CS)\big\|_{\Frob}^{2},  
\]
which, in complement to the theoretical properties enjoyed by the radial SKD, further support the use of the radial SKD as a numerically affordable surrogate for \ref{it:CrtiTrace}-\ref{it:CritSpec} (see also the numerical experiments in Section \ref{sec:experiments}). 

From now on, we assume that $\SX=\BR^{d}$. Let $[s]_{l}$, with $l\in\{1,\cdots,d\}$, be the $l$-th coordinate of $s$ in the canonical basis of $\SX = \BR^{d}$. For $x\in\SX$, we denote by (assuming they exist)
\begin{equation}\label{eq:KernelLocPartialDeriv}
\partial^{[\rml]}_{[s]_{l}}K^{2}(s,x)   
\quad\text{ and }\quad
\partial^{[\rmd]}_{[s]_{l}}K^{2}(s,s)  
\end{equation}
the partial derivatives of the maps $s\mapsto K^{2}(s,x)$ and $s\mapsto K^2(s,s)$  at $s$ and with respect to the $l$-th coordinate of $s$, respectively;
the notation $\partial^{[\rml]}$ indicates that the left entry of the kernel is considered, while $\partial^{[\rmd]}$  refers to the diagonal of the kernel; we use similar notations for any kernel function on $\SX\times\SX$.

For a fixed number of landmark points $n\in\BN$, the radial SKD criterion can be regarded as a function from $\SX^{n}$ to $\BR$. For a Nystr\"om sample $\CS=\{s_{1},\cdots,s_{n}\}\in\SX^{n}$, and for $k \in \{1,\cdots, n\}$ and $l \in \{1,\cdots, d\}$, we denote by $\partial_{[s_{k}]_{l}} R(\CS)$ the partial derivative of the map $R:\SX^{n}\to\BR$ at $\CS$ with respect to the $l$-th coordinate of the $k$-th landmark point $s_{k}\in\SX$. We have
\begin{align}
\begin{split}\label{eq:radial-SKD-true-partial-pos}
\partial_{[s_{k}]_{l}} R(\CS)
& = \frac{1}{\|\BfK_{\CS}\|_{\Frob}^{4}}\bigg( \sum_{i=1}^{N}\sum_{j=1}^{n} K^{2}(s_{j}, x_{i}) \bigg)^{2}
\bigg( \partial_{[s_{k}]_{l}}^{[\rmd]}K^{2}(s_{k}, s_{k}) + 2 \sum_{\substack{j=1, \\ j \neq k}}^{n} \partial_{[s_{k}]_{l}}^{[\rml]} K^2(s_k, s_j) \bigg) \\
& \quad \quad \phantom{x} - \frac{2}{\|\BfK_{\CS}\|_{\Frob}^{2}} \bigg(\sum_{i=1}^{N} \sum_{j=1}^{n}K^2(s_j, x_i) \bigg)
\bigg( \sum_{i=1}^N \partial_{[s_{k}]_{l}}^{[\rml]} K^2(s_k, x_i) \bigg). 
\end{split}
\end{align}
In this work, we investigate the possibility to use the partial derivatives \eqref{eq:radial-SKD-true-partial-pos}, or stochastic approximations of these derivatives, to directly optimise the radial SKD criterion $R$ over $\SX^{n}$ via gradient or stochastic gradient descent; the stochastic approximation schemes we consider aim at reducing the burden of the numerical cost induced by the evaluation of the partial derivatives of $R$ when $N$ is large. 
 
The document is organised as follows. In Section~\ref{sec:ConvResult}, we discuss the convergence of a gradient descent with fixed step size for the minimisation of $R$ over $\SX^{n}$. The stochastic approximation of the gradient of the radial SKD criterion \eqref{eq:RadialSKD} is discussed in Section~\ref{sec:stoch-approx}, and some numerical experiments are carried out in Section~\ref{sec:experiments}. Section~\ref{sec:conclusion} consists of a concluding discussion, and the Appendix contains a proof of Theorem~\ref{thm:lipschitz}.

\section{A Convergence Result}
\label{sec:ConvResult}

We use the same notation as in Section~\ref{sec:RadialSkd} (in particular, we still assume that $\SX=\BR^{d}$), and by analogy with \eqref{eq:KernelLocPartialDeriv}, for $s$ and $x\in\SX$, and for $l\in\{1,\cdots,d\}$, we denote by
$\partial^{[\rmr]}_{[s]_{l}}K^{2}(x,s)$ the partial derivative of the map $s\mapsto K^{2}(x,s)$ with respect to the $l$-th coordinate of $s$. Also, for a fixed $n\in\BN$, we denote by $\nabla R(\CS)\in\SX^{n}=\mathbb{R}^{nd}$ the gradient of $R:\SX^{n}\to\BR$ at $\CS$; in matrix notation, we have
\[
\nabla R(\CS) = \Big(\big(\nabla_{s_1}R(\CS)\big)^{T}, \cdots, \big(\nabla_{s_1}R(\CS)\big)^{T}\Big)^{T}, 
\]
with $\nabla_{s_{k}} R(\CS) = \big(\partial_{[s_{k}]_{1}} R(\CS), \cdots, \partial_{[s_{k}]_{d}} R(\CS)\big)^{T}\in\mathbb{R}^{d}$ for $k\in\{1,\cdots,n\}$.

\begin{theorem}\label{thm:lipschitz}
We make the following assumptions on the squared-kernel $K^2$, which we assume hold for all $x$ and $y\in\SX=\BR^{d}$, and all $l$ and $l' \in\{1, \cdots, d\}$, uniformly:  
\begin{enumerate}[label={(C.\arabic*)}]
\itemsep0em  
\item there exists $\alpha > 0$ such that $K^2(x, x) \geqslant \alpha$; \label{itm:sq-kernel-diag-lower-bound} 
\item there exists $M_{1} > 0$ such that $\big|\partial_{[x]_l}^{[\rmd]} K^2(x, x)\big| \leqslant M_{1}$ and $\big|\partial_{[x]_l}^{[\rml]} K^2(x, y)\big| \leqslant M_{1}$; \label{itm:sq-kernel-partial-bound} 
\item there exists $M_{2} > 0$ such that $\big|\partial_{[x]_l}^{[\rmd]} \partial_{[x]_{l'}}^{[\rmd]} K^2(x, x)\big| \leqslant M_{2}$,  $\big|\partial_{[x]_l}^{[\rml]} \partial_{[x]_{l'}}^{[\rml]} K^2(x, y)\big| \leqslant M_2$ and\\
$\big|\partial_{[x]_l}^{[\rml]} \partial_{[y]_{l'}}^{[\rmr]} K^2(x, y)\big| \leqslant M_{2}$. 
\label{itm:sq-kernel-partial-2-bound}
\end{enumerate} 
Let $\CS$ and $\CS'\in\BR^{nd}$ be two Nystr\"om samples; under the above assumptions, there exists $L>0$ such that
\[
\big\|\nabla R(\CS)-\nabla R(\CS')\big\|
\leqslant L\big\|\CS-\CS'\big\| 
\]  
with $\|.\|$ the Euclidean norm of $\BR^{nd}$; in other words, the gradient of $R:\BR^{nd}\to\BR$ is Lipschitz-continuous with Lipschitz constant $L$. 
\end{theorem}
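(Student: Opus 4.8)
The plan is to show that $\nabla R$ is Lipschitz by proving that each partial derivative $\partial_{[s_k]_l} R$ is itself a Lipschitz-continuous function of $\CS \in \BR^{nd}$; summing the squared local Lipschitz estimates over $k \in \{1,\dots,n\}$ and $l \in \{1,\dots,d\}$ then yields a global constant $L$ (up to a factor $\sqrt{nd}$, which is harmless for a fixed $n$). To this end I would first set up convenient notation for the building blocks appearing in \eqref{eq:radial-SKD-true-partial-pos}: write $A(\CS) = \sum_{i=1}^N \sum_{j=1}^n K^2(s_j,x_i)$ for the cross-term, $B(\CS) = \|\BfK_\CS\|_\Frob^2 = \sum_{j,j'=1}^n K^2(s_j,s_{j'})$ for the squared Frobenius norm of the landmark kernel matrix, and $G_{k,l}(\CS) = \partial^{[\rmd]}_{[s_k]_l}K^2(s_k,s_k) + 2\sum_{j\neq k}\partial^{[\rml]}_{[s_k]_l}K^2(s_k,s_j)$ and $H_{k,l}(\CS) = \sum_{i=1}^N \partial^{[\rml]}_{[s_k]_l}K^2(s_k,x_i)$ for the two derivative aggregates, so that
\[
\partial_{[s_k]_l} R(\CS) = \frac{A^2 G_{k,l}}{B^2} - \frac{2 A H_{k,l}}{B}.
\]

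The key structural observation is that $B(\CS) \geqslant n\alpha^2 > 0$ uniformly, by assumption \ref{itm:sq-kernel-diag-lower-bound} applied to the $n$ diagonal entries $K^2(s_j,s_j)$; this is what makes the denominators harmless. Next I would establish, using \ref{itm:sq-kernel-partial-bound} and \ref{itm:sq-kernel-partial-2-bound}, that each of $A$, $B$, $G_{k,l}$, $H_{k,l}$ is Lipschitz on $\BR^{nd}$ with an explicit constant: $A$ and $B$ are Lipschitz because their gradients are bounded (the entries of $\nabla A$, $\nabla B$ are sums of first-order squared-kernel derivatives, bounded by $M_1$ via \ref{itm:sq-kernel-partial-bound}, with at most $N n$, resp. $2n$, terms); $G_{k,l}$ and $H_{k,l}$ are Lipschitz because their gradients consist of second-order squared-kernel derivatives bounded by $M_2$ via \ref{itm:sq-kernel-partial-2-bound} (again with a controlled number of terms depending only on $n$ and $N$). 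I would also record uniform bounds $|A| \leqslant Nn M_0$ where $M_0 := \sup K^2$ — but note $K^2$ need not be bounded globally, so instead I should bound $|A|$ and $|G_{k,l}|,|H_{k,l}|$ along a segment $[\CS,\CS']$ in terms of the endpoints, or better, observe that the Lipschitz property only needs to be verified locally and then extended: since $R \in C^2$ under the hypotheses and we want a global Lipschitz constant for $\nabla R$, the cleanest route is to bound $\|\nabla^2 R(\CS)\|$ uniformly in $\CS$ and invoke the mean value inequality. That reduces everything to showing the Hessian entries $\partial_{[s_k]_l}\partial_{[s_{k'}]_{l'}} R(\CS)$ are uniformly bounded.

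So the refined plan is: differentiate the displayed expression for $\partial_{[s_k]_l}R$ once more, using the quotient and product rules, and bound the result using (i) $B \geqslant n\alpha^2$, (ii) the uniform bounds $|A| \leqslant NnC_0$, $|G_{k,l}| \leqslant (2n-1)M_1$, $|H_{k,l}| \leqslant N M_1$, and (iii) the uniform bounds on the derivatives of $A$, $B$, $G_{k,l}$, $H_{k,l}$ from \ref{itm:sq-kernel-partial-bound}–\ref{itm:sq-kernel-partial-2-bound} — here $C_0$ is a uniform bound on $K^2$, which does follow from the assumptions: indeed \ref{itm:sq-kernel-diag-lower-bound} is a lower bound, but an \emph{upper} bound on $K^2(x,y)$ follows from Cauchy–Schwarz, $K^2(x,y) \leqslant K(x,x)K(y,y)$, combined with — hm, this still needs $K$ bounded on the diagonal. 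The honest resolution is that the hypotheses as stated bound only \emph{derivatives} of $K^2$, not $K^2$ itself, so $A$ grows at most linearly in $\|\CS\|$; consequently the term $A^2 G_{k,l}/B^2$ is not globally Lipschitz unless $R$'s denominators compensate — and they do, because $B$ also contains the "diagonal" terms and grows. I expect the main obstacle to be exactly this bookkeeping: showing that the potentially unbounded factors $A$ and $B$ enter only through ratios $A/B$ that \emph{are} uniformly bounded and uniformly Lipschitz. Concretely, one shows $0 \leqslant A^2/B^2 \leqslant $ (something uniform) by a Cauchy–Schwarz argument relating $A = \sum_i \sum_j K^2(s_j,x_i)$ to $B = \sum_{j,j'}K^2(s_j,s_{j'})$ — this is precisely the content behind the inequality chain $R(\CS) \leqslant \|\BfK\|_\Frob^2$ quoted in the excerpt, which gives $A^2/B \leqslant \|\BfK\|_\Frob^2$, hence $A^2/B^2 \leqslant \|\BfK\|_\Frob^2/(n\alpha^2)$, and similarly $A/B \leqslant $ uniform — and then differentiates these bounded ratios, using that their numerators and denominators have bounded gradients and the denominator is bounded below. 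Once all four scalar fields and all the relevant ratios are shown to be uniformly bounded with uniformly bounded gradients, assembling a bound on each Hessian entry $\partial_{[s_{k'}]_{l'}}\partial_{[s_k]_l}R(\CS)$ via repeated product/quotient rule is routine, and $L$ can be taken as $\sqrt{nd}$ times the resulting uniform bound on $\|\nabla^2 R\|_\infty$. I would relegate the full constant-chasing to the Appendix, as the excerpt indicates the authors do.
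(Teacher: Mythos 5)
Your proposal is correct and follows essentially the same route as the paper's proof: bound the Hessian of $R$ uniformly (hence Lipschitz gradient via the mean value inequality), with the key observation that the individually unbounded sums $A$ and $B$ enter the first and second derivatives only through the ratio $c_{\CS}=A/B$ and its derivatives, the paper bounding $0\leqslant c_{\CS}\leqslant \|\BfK\|_{\Frob}/\sqrt{n\alpha}$ by Cauchy--Schwarz in the RKHS of the squared kernel --- which is exactly the bound $A^{2}/B\leqslant\|\BfK\|_{\Frob}^{2}$ you invoke (note this is equivalent to $R(\CS)\geqslant 0$, not to $R(\CS)\leqslant\|\BfK\|_{\Frob}^{2}$ as written). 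The only other slip is cosmetic: \ref{itm:sq-kernel-diag-lower-bound} bounds $K^{2}(x,x)$ directly, so the denominator bound is $B\geqslant n\alpha$ rather than $n\alpha^{2}$.
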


Since $R$ is bounded from below, for $0<\gamma\leqslant 1/L$ and independently of the considered initial Nystr\"om sample $\CS^{(0)}$, Theorem~\ref{thm:lipschitz} entails that a gradient descent from $\CS^{(0)}$, with fixed stepsize $\gamma$ for the minimisation of $R$ over $\SX^{n}$, produces a sequence of iterates that converges to a critical point of $R$. Barring some specific and largely pathological cases, the resulting critical point is likely to be a local minimum of $R$, see for instance \cite{lee2016gradient}. See the Appendix for a proof of Theorem~\ref{thm:lipschitz}. 

The conditions considered in Theorem~\ref{thm:lipschitz} ensure the existence of a general Lipschitz constant $L$ for the gradient of $R$; they, for instance, hold for all sufficiently regular Mat\'ern kernels (thus including the Gaussian or squared-exponential kernel). These conditions are only sufficient conditions for the convergence of a gradient descent for the minimisation of $R$. By introducing additional problem-dependent conditions, some convergence results might be obtained for more general squared kernels $K^{2}$ and adequate initial Nystr\"om samples $\CS^{(0)}$. For instance, the condition \ref{itm:sq-kernel-diag-lower-bound} simply aims at ensuring that $\|\BfK_{\CS}\|_{\Frob}^{2}\geqslant n\alpha>0$ for all $\CS\in\SX^{n}$; this condition might be relaxed to account for kernels with vanishing diagonal, but one might then need to introduce ad hoc conditions to ensure that $\|\BfK_{\CS}\|_{\Frob}^{2}$ remains large enough during the minimisation process.

\section{Stochastic Approximation of the Radial SKD Gradient}
\label{sec:stoch-approx}

The complexity of evaluating a partial derivative of $R:\SX^{n}\to\BR$ is $\CO(n^{2}+nN)$, which might become prohibitive for large values of $N$. 
To overcome this limitation, stochastic approximations of the gradient of $R$ might be considered (see e.g. \cite{bottou2018optimization}).

The evaluation of \eqref{eq:radial-SKD-true-partial-pos} involves, for instance, terms of the form
$\sum_{i=1}^{N}K^2(s, x_{i})$, with $s\in\SX$ and $\CD=\{x_{1},\cdots,x_{N}\}$. Introducing a random variable $X$ with uniform distribution on $\CD$, we can note that 
\[
\sum_{i=1}^N K^2(s, x_i) = N \BE\big[K^2(s, X)\big], 
\]
and the mean $\BE[K^2(s, X)]$ may then, classically, be approximated by random sampling. More precisely, if $X_{1},\cdots,X_{b}$ are $b\in\BN$
copies of $X$, we have 
\[
\BE\big[K^2(s, X)\big] = \frac{1}{b} \sum_{j=1}^b \BE\big[K^2(s, X_j)\big]
\quad \text{and} \quad
\BE\big[\partial_{[s]_l}^{[\rml]} K^2(s, X)\big]=\frac{1}{b}\sum_{j=1}^b \BE\big[\partial_{[s]_l}^{[\rml]} K^2(s, X_j)\big],  
\]
so that we can easily define unbiased estimators of the various terms appearing in \eqref{eq:radial-SKD-true-partial-pos}. We refer to the sample size $b$ as the \textit{batch size}.

Let $k \in \{1, \ldots, n\}$ and $l \in \{1, \ldots, d\}$; the partial derivative \eqref{eq:radial-SKD-true-partial-pos} can be rewritten as 
\[
 \partial_{[s_k]_l} R(\CS)
 = \frac{T_{1}^{2}}{\|\BfK_{\CS}\|_{\Frob}^{4}}\Upsilon(\CS)
- \frac{2 T_{1} T_{2}^{k,l}}{\|\BfK_\CS\|_\Frob^2},
\]
with
$T_{1} = \sum_{i=1}^{N} \sum_{j=1}^{n} K^2(s_{j,} x_{i})$ and 
$T_{2}^{k, l} = \sum_{i=1}^{N} \partial_{[s_{k}]_{l}}^{[\rml]} K^2(s_{k}, x_{i})$, and 
\[
\Upsilon(\CS)
=\partial_{[s_{k}]_{l}}^{[\rmd]} K^{2}(s_{k}, s_{k}) + 2 \sum_{\substack{j=1, \\ j \neq k}}^{n} \partial_{[s_{k}]_{l}}^{[\rml]} K^{2}(s_{k}, s_{j}).  
\] 
The terms $T_1$ and $T_2^{k, l}$ are the only terms in \eqref{eq:radial-SKD-true-partial-pos} that depend on $\CD$. From a uniform random sample $\BfX=\{X_{1},\cdots,X_{b}\}$, we define the unbiased estimators
$\hat{T}_{1}(\BfX)$ of $T_1$, and $\hat{T}_{2}^{k, l}(\BfX)$ of $T_2^{k, l}$, as 
\[
\hat{T}_{1}(\BfX) = \frac{N}{b} \sum_{i=1}^{n}\sum_{j=1}^{b}K^{2}(s_{i}, X_{j}),
\quad\text{ and }\quad
\hat{T}_{2}^{k, l}(\BfX) =\frac{N}{b} \sum_{j=1}^{b}\partial_{[s_{k}]_{l}}^{[\rml]} K^{2}(s_{k}, X_{j}).
\]
In what follows, we discuss the properties of some stochastic approximations of the gradient of $R$ that can be defined from such estimators.

\paragraph{One-Sample Approximation. }
Using a single random sample $\BfX=\{X_{1},\cdots,X_{b}\}$ of size $b$, we can define the following stochastic approximation of the partial derivative \eqref{eq:radial-SKD-true-partial-pos}:
\begin{equation}\label{eq:OneSampApp}
 \hat{\partial}_{[s_{k}]_{l}}R(\CS;\BfX)
 =\frac{\hat{T}_{1}(\BfX)^{2}}{\|\BfK_{\CS}\|_{\Frob}^{4}}\Upsilon(\CS)
-\frac{2 \hat{T}_{1}(\BfX) \hat{T}_{2}^{k, l}(\BfX)}{\|\BfK_\CS\|_\Frob^2}. 
\end{equation}
An evaluation of $\hat{\partial}_{[s_{k}]_{l}}R(\CS;\BfX)$ has complexity $\CO(n^{2}+nb)$, as opposed to $\CO(n^{2}+nN)$ for the corresponding exact partial derivative. 
However, due to the dependence between $\hat{T}_1(\BfX)$ and $\hat{T}_{2}^{k, l}(\BfX)$, and to the fact that $\hat{\partial}_{[s_{k}]_{l}}R(\CS;\BfX)$ involves the square of $\hat{T}_1(\BfX)$, the stochastic partial derivative $\hat{\partial}_{[s_{k}]_{l}}R(\CS;\BfX)$ will generally be a biased estimator of $\partial_{[s_{k}]_{l}}R(\CS)$. 

\paragraph{Two-Sample Approximation. }
To obtain an unbiased estimator of the partial derivative \eqref{eq:radial-SKD-true-partial-pos}, instead of considering a single random sample, we may define a stochastic approximation based on two independent random samples $\BfX=\{X_{1},\cdots, X_{b_{\BfX}}\}$ and $\BfY=\{Y_{1},\cdots, Y_{b_{\BfY}}\}$, consisting of $b_{\BfX}$ and $b_{\BfY}\in\BN$ copies of $X$ (i.e. consisting of uniform random variables on $\CD$), with $b=b_{\BfX}+b_{\BfY}$.
The two-sample estimator of \eqref{eq:radial-SKD-true-partial-pos} is then given by
\begin{equation}\label{eq:TwoSampApp}
 \hat{\partial}_{[s_{k}]_{l}}R(\CS;\BfX,\BfY)
 =\frac{\hat{T}_{1}(\BfX) \hat{T}_{1}(\BfY)}{\|\BfK_{\CS}\|_{\Frob}^{4}}\Upsilon(\CS)
 - \frac{2 \hat{T}_{1}(\BfX) \hat{T}_{2}^{k, l}(\BfY)}{\|\BfK_\CS\|_\Frob^2},    
\end{equation}
and since $\BE\big[\hat{T}_1(\BfX)\hat{T}_1(\BfY)\big]=T_{1}^{2} $ and  $\BE\big[\hat{T}_1(\BfX) \hat{T}_{2}^{k, l}(\BfY)\big]=T_{1}T_{2}^{k, l}$, we have
\[
\BE\Big[\hat{\partial}_{[s_{k}]_{l}}R(\CS;\BfX,\BfY)\Big]=\partial_{[s_{k}]_{l}}R(\CS).
\]

Although being unbiased, for a common batch size $b$, the variance of the two-sample estimator \eqref{eq:TwoSampApp} will generally be larger than the variance of the one-sample estimator \eqref{eq:OneSampApp}. In our numerical experiments, the larger variance of the unbiased estimator \eqref{eq:TwoSampApp} seems to actually slow down the descent when compared to the descent obtained with the one-sample estimator \eqref{eq:OneSampApp}.

\begin{remark}
 While considering two independent samples $\BfX$ and $\BfY$, the two terms $\hat{T}_1(\BfX)\hat{T}_1(\BfY)$ and $\hat{T}_{1}(\BfX) \hat{T}_{2}^{k, l}(\BfY)$ appearing in \eqref{eq:TwoSampApp} are dependent. This dependence may complicate the analysis of the properties of the resulting SGD; nevertheless, this issue might be overcome by considering four independent samples instead of two. 
\fin
\end{remark}

\section{Numerical Experiments}
\label{sec:experiments}

Throughout this section, the matrices $\BfK$ are defined from multisets
$\CD=\{x_{1},\cdots,x_{N}\}\subset\BR^{d}$ 
and from kernels $K$ of the form $K(x,t)=e^{-\rho\|x-t\|^{2}}$, with $\rho>0$ and where $\|.\|$ is the Euclidean norm of $\BR^{d}$ (Gaussian kernel). Except for the synthetic example of Section~\ref{sec:BiGauss}, all the multisets $\CD$ we consider consist of the entries of data sets available on the UCI Machine Learning Repository; see \cite{Dua2019UCI}. 

Our experiments are based on the following protocol: for a given $n\in\BN$, we consider an initial Nystr\"om sample $\CS^{(0)}$ consisting of $n$ points drawn uniformly at random, without replacement, from $\CD$. The initial sample $\CS^{(0)}$ is regarded as an element of $\SX^{n}$, and used to initialise a GD or SGD, with fixed stepsize $\gamma>0$, for the minimisation of $R$ over $\SX^{n}$, yielding, after $T\in\BN$ iterations, a locally optimised Nystr\"om sample $\CS^{(T)}$. The SGDs are performed with the one-sample estimator \eqref{eq:OneSampApp} and are based on independent and identically distributed uniform random variables on $\CD$ (i.e. i.i.d. sampling), with batch size $b\in\BN$; see Section~\ref{sec:stoch-approx}.  We assess the accuracy of the Nystr\"om approximations of $\BfK$ induced by $\CS^{(0)}$ and $\CS^{(T)}$ in terms of radial SKD and of the classical criteria \ref{it:CrtiTrace}-\ref{it:CritSpec}.

For a Nystr\"om sample $\CS\in\SX^{n}$ of size $n\in\BN$, the matrix $\hat{\BfK}(\CS)$ is of rank at most $n$.
Following \cite{gittens2016revisiting, derezinski2020improved}, 
to further assess the efficiency of the approximation of $\BfK$ induced by $\CS$, we introduce the \textit{approximation factors}
\begin{equation}\label{eq:efficiency}
\CE_{\mathrm{tr}}(\CS) = \frac{\|\BfK - \hat{\BfK}(\CS)\|_{*}}{\|\BfK - \BfK_n\|_{*}}, \quad
\CE_{\mathrm{F}}(\CS) = \frac{\|\BfK - \hat{\BfK}(\CS)\|_{\Frob}}{\|\BfK - \BfK_n\|_{\Frob}}, \quad
\text{ and }
\CE_{\mathrm{sp}}(\CS) = \frac{\|\BfK - \hat{\BfK}(\CS)\|_{2}}{\|\BfK - \BfK_{n}\|_{2}}, 
\end{equation}
where $\BfK_n$ denotes an optimal rank-$n$ approximation of $\BfK$ (i.e. the approximation of $\BfK$ obtained by truncation of a spectral expansion of $\BfK$ and based on $n$ of the largest eigenvalues of $\BfK$).  The closer $\CE_{\mathrm{tr}}(\CS)$, $\CE_{\mathrm{F}}(\CS)$ and $\CE_{\mathrm{sp}}(\CS)$ are to $1$, the more efficient the approximation is.

\subsection{Bi-Gaussian Example}
\label{sec:BiGauss}
We consider a kernel matrix $\BfK$ defined by a set $\CD$ consisting of $N=2{,}000$ points in $[-1,1]^{2}\subset\BR^{2}$ (i.e. $d=2$); for the kernel parameter, we use $\rho=1$. A graphical representation of the set $\CD$ is given in Figure~\ref{fig:BiGaussLandmarkPath};  it consists of $N$ independent realisations of a bivariate random variable whose density is proportional to the restriction of a bi-Gaussian density to the set $[-1,1]^{2}$ (the two modes of the underlying distribution are located at $(-0.8,0.8)$ and $(0.8,-0.8)$, and the covariance matrix of the each Gaussian density is $\BI_{2}/2$, with  $\BI_{2}$ the $2\times 2$ identity matrix). 

\begin{figure}[h!]
\begin{center}
\includegraphics[width=.9\linewidth]{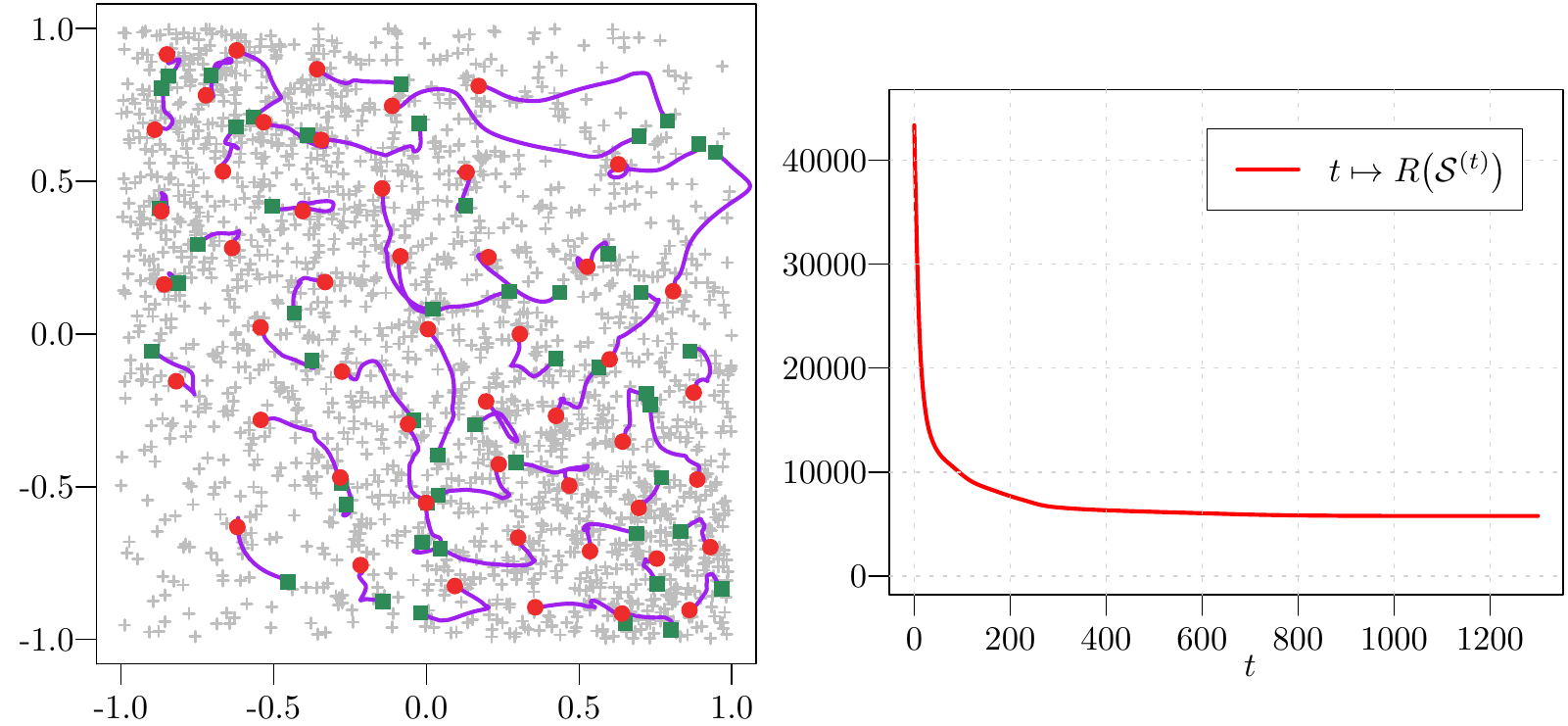}
\end{center}
\caption{Graphical representation of the path followed by the landmark points of a Nystrom sample during the local minimisation of $R$ through GD, with $n=50$, $\gamma=10^{-6}$ and $T=1{,}300$; the green squares are the landmark points of the initial sample $\CS^{(0)}$, the red dots are the landmark points of the locally optimised sample $\CS^{(T)}$, and the purple lines correspond to the paths followed by each landmark point (left). The corresponding decay of the radial SKD is also presented (right). }
\label{fig:BiGaussLandmarkPath}
\end{figure}

The initial samples $\CS^{(0)}$ are optimised via GD with stepsize $\gamma=10^{-6}$ and for a fixed number of iterations $T$. A graphical representation of the paths followed by the landmark points during the optimisation process is given in Figure~\ref{fig:BiGaussLandmarkPath} (for $n=50$ and $T=1{,}300$); we observe that the landmark points exhibit a relatively complex dynamic, some of them showing significant displacements from their initial positions. The optimised landmark points concentrate around the regions where the density of points in $\CD$ is the largest, and inherit a space-filling-type property in accordance with the stationarity of the kernel $K$.

To assess the improvement yielded by the optimisation process, for a given number of landmark points $n\in\BN$, we randomly draw an initial Nystr\"om sample $\CS^{(0)}$ from $\CD$ (uniform sampling without replacement) and compute the corresponding locally optimised sample $\CS^{(T)}$ (GD with $\gamma=10^{-6}$ and $T=1{,}000$). We then compare $R\big(\CS^{(0)}\big)$ with $R\big(\CS^{(T)}\big)$, and compute the corresponding approximation factors with respect to the trace, Frobenius and spectral norms, see \eqref{eq:efficiency}. We consider three different values of $n$, namely $n=20$, $50$ and $80$, and each time perform $m=1{,}000$ repetitions of this experiment. Our results are presented in Figure~\ref{fig:boxplotBiGauss}; we observe that, independently of $n$, the local optimisation produces a significant improvement of the Nystr\"om approximation accuracy for all the criterion considered; the improvements are particularly noticeable for the trace and Frobenius norms, and slightly less for the spectral norm (which of the three, appears the coarsest measure of the approximation accuracy). Remarkably, the efficiencies of the locally optimised Nystr\"om samples are relatively close to each other, in particular in terms of trace and Frobenius norms, suggesting that a large proportion of the local minima of the radial SKD induce approximations of comparable quality.

\begin{figure}[ht!]
\begin{center}
\includegraphics[width=\linewidth]{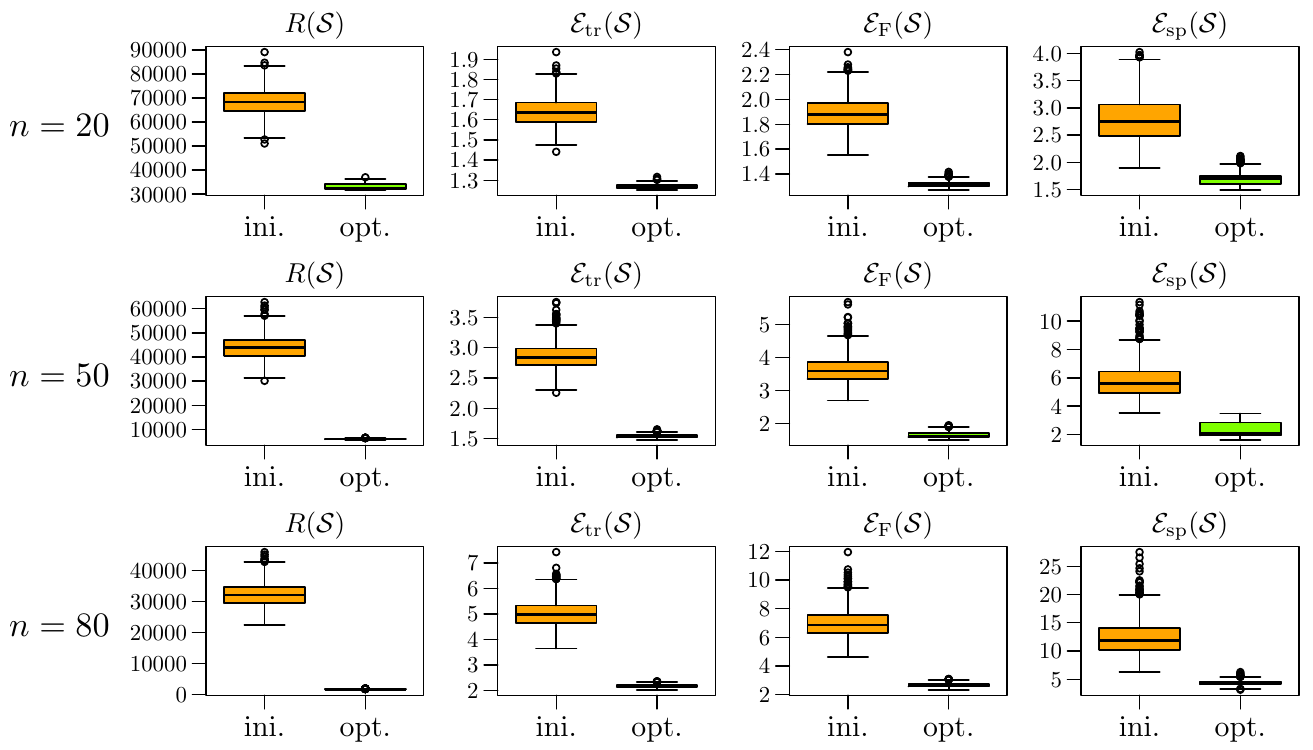}
\end{center}
\caption{For the Bi-Gaussian example, comparison of the efficiency of the Nystr\"om approximations for the initial samples $\CS^{(0)}$ and the locally optimised samples $\CS^{(T)}$ (optimisation through GD with $\gamma=10^{-6}$ and $T=1{,}000$). Each row corresponds to a given value of $n$; in each case $m=1{,}000$ repetitions are performed. The first column corresponds to the radial SKD, and the following three correspond to the approximation factors defined in \eqref{eq:efficiency}.}
\label{fig:boxplotBiGauss}
\end{figure}

\subsection{Abalone Data Set}
\label{sec:abalone}
We now consider the $d=8$ attributes of the Abalone data set. After removing two observations that are clear outliers, we are left with $N=4{,}175$ entries. Each of the $8$ features is standardised such that it has zero mean and unit variance. We set $n=50$ and
consider three different values of the kernel paramater $\rho$, namely $\rho=0.25$, $1$, and $4$; this values are chosen so that the eigenvalues of the kernel matrix $\BfK$ exhibit sharp, moderate and shallower decays, respectively. For the Nystr\"om sample optimisation, we use SGD with i.i.d. sampling and batch size $b=50$, $T=10{,}000$ and $\gamma=8\times10^{-7}$;  these values were chosen to obtain relatively efficient optimisations for the whole range of values of $\rho$ we consider. For each value of $\rho$, we perform $m=200$ repetitions. The results are presented in Figure~\ref{fig:abalone-50}. 

\begin{figure}[ht!]
\begin{center}
\includegraphics[width=\linewidth]{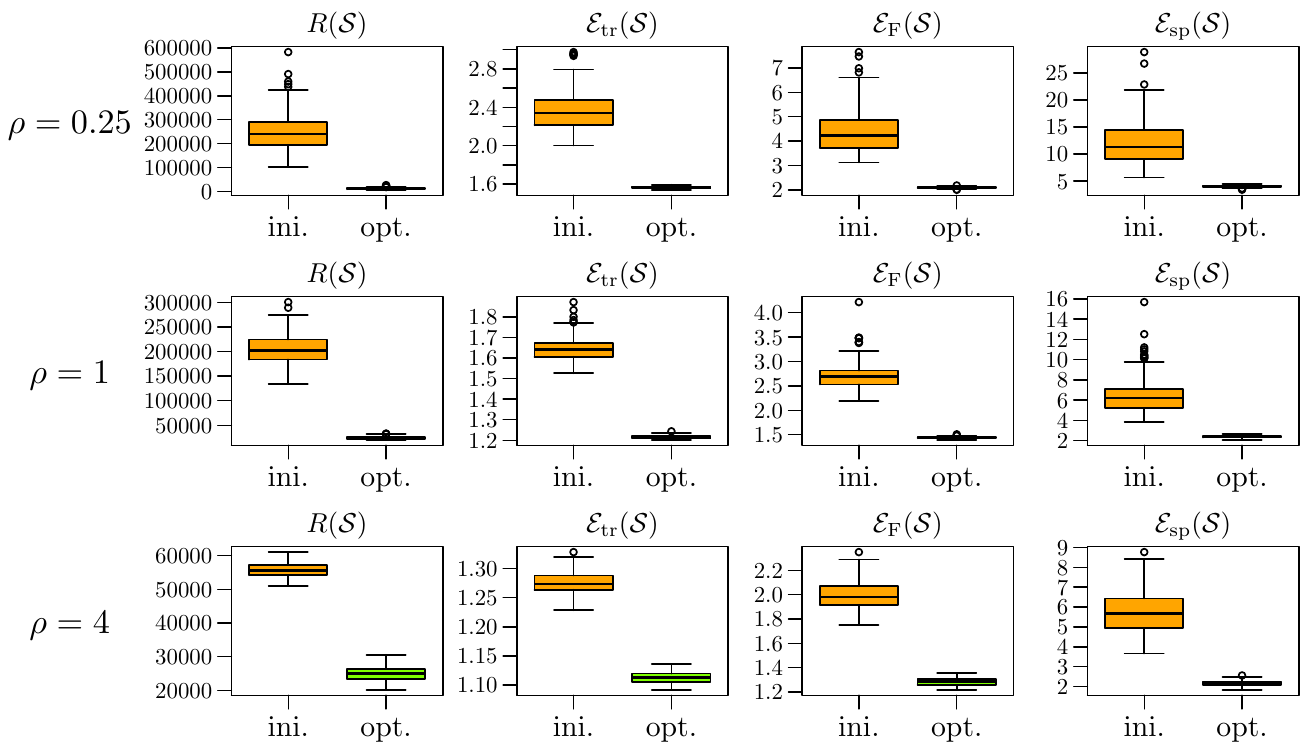}
\end{center}
\caption{For the Abalone data set with $n = 50$ and $\rho\in\{0.25, 1, 4\}$, comparison of the efficiency of the Nystr\"om approximations for the initial Nystr\"om samples $\CS^{(0)}$ and the locally optimised samples $\CS^{(T)}$ (SGD with i.i.d sampling, $b=50$, $\gamma=8\times10^{-7}$ and $T=10{,}000$). Each row corresponds to a given value of $\rho$; in each case, $m=200$ repetitions are performed. }
\label{fig:abalone-50}
\end{figure}

We observe that regardless of the values of $\rho$ and in comparison with the initial Nystr\"om samples, the efficiencies of the locally optimised samples in terms of trace, Frobenius and spectral norms are significantly improved. As observed in Section~\ref{sec:BiGauss}, the gains yielded by the local optimisations are more evident in terms of trace and Frobenius norms, and the impact of the initialisation appears limited.

\subsection{MAGIC Data Set} \label{sec:magic}

We consider the $d=10$ attributes of the MAGIC Gamma Telescope data set. In pre-processing, we remove the $115$ duplicated entries in the data set, leaving us with $N=18{,}905$ data points; we then standardise each of the $d=10$ features of the data set. For the kernel parameter, we use $\rho=0.2$. 

In Figure~\ref{fig:magic}, we present the results obtained after the local optimisation of $m=200$ random initial Nystr\"om samples of size $n=100$ and $200$. Each optimisation was performed through SGD with i.i.d. sampling, batch size $b=50$ and stepsize $\gamma=5\times 10^{-8}$; as number of iterations, for $n=100$, we used $T=3{,}000$, and $T=4{,}000$ for $n=200$. The optimisation parameters were chosen to obtain relatively efficient but not fully completed descents, as illustrated in Figure~\ref{fig:magic}. Alongside the radial SKD, we only compute the approximation factor corresponding to the trace norm (the trace norm is indeed the least costly to evaluate of the three matrix norms we consider, see Section \ref{sec:Nystrom-accuracy}). As in the previous experiments, we observe a significant improvement of the initial Nystr\"om samples obtained by local optimisation of the radial SKD.

\begin{figure}[ht!]
\begin{center}
\includegraphics[width=\linewidth]{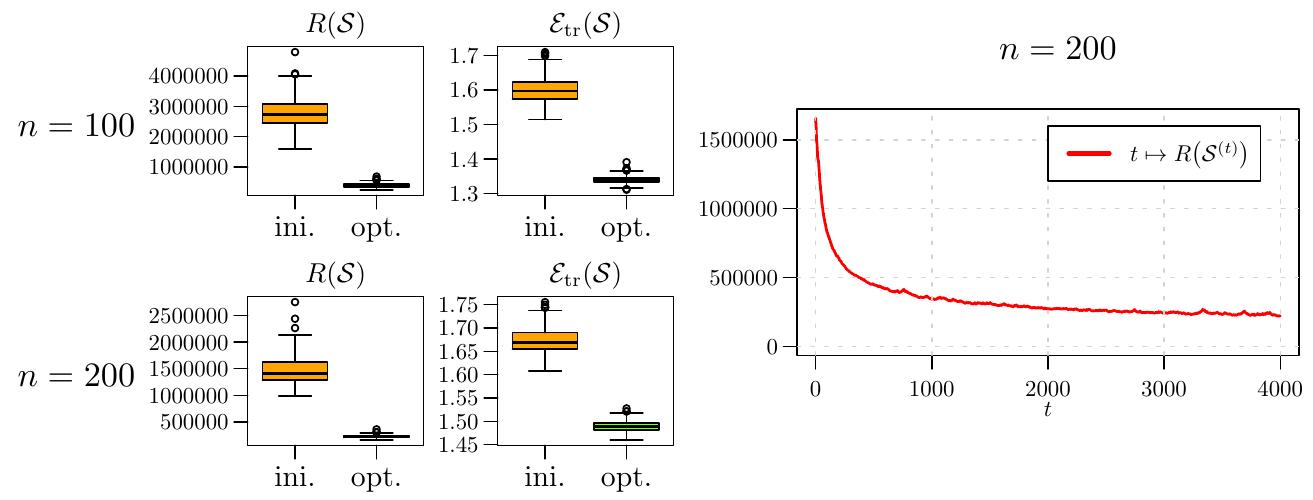}
\end{center}
\caption{For the MAGIC data set, boxplots of the radial SKD $R$ and of the approximation factor $\CE_{\mathrm{tr}}$ before and after the local optimisation via SGD of random Nystr\"om samples of size $n=100$ and $200$; for each value of $n$, $m=200$ repetitions are performed. The SGD is based on i.i.d. sampling, with $b=50$ and $\gamma=5\times 10^{-8}$; for $n=100$, the descent is stopped after $T=3{,}000$ iterations, and after $T=4{,}000$ iterations for $n=200$ (left). A graphical representation of the decay of the radial SKD is also presented for $n=200$ (right). }
\label{fig:magic}
\end{figure}

\subsection{MiniBooNE Data Set} \label{sec:miniboone}

In this last experiment, we consider the $d=50$ attributes of the MiniBooNE particle identification data set. In pre-processing, we remove the $471$ entries in the data set with missing values, and $1$ entry appearing as a clear outlier, leaving us with $N = 129{,}592$ data points; we then standardise each of the $d=50$ features of the data set. We use $\rho=0.04$ (kernel parameter).

\begin{figure}[ht!]
\begin{center}
\includegraphics[width=.9\linewidth]{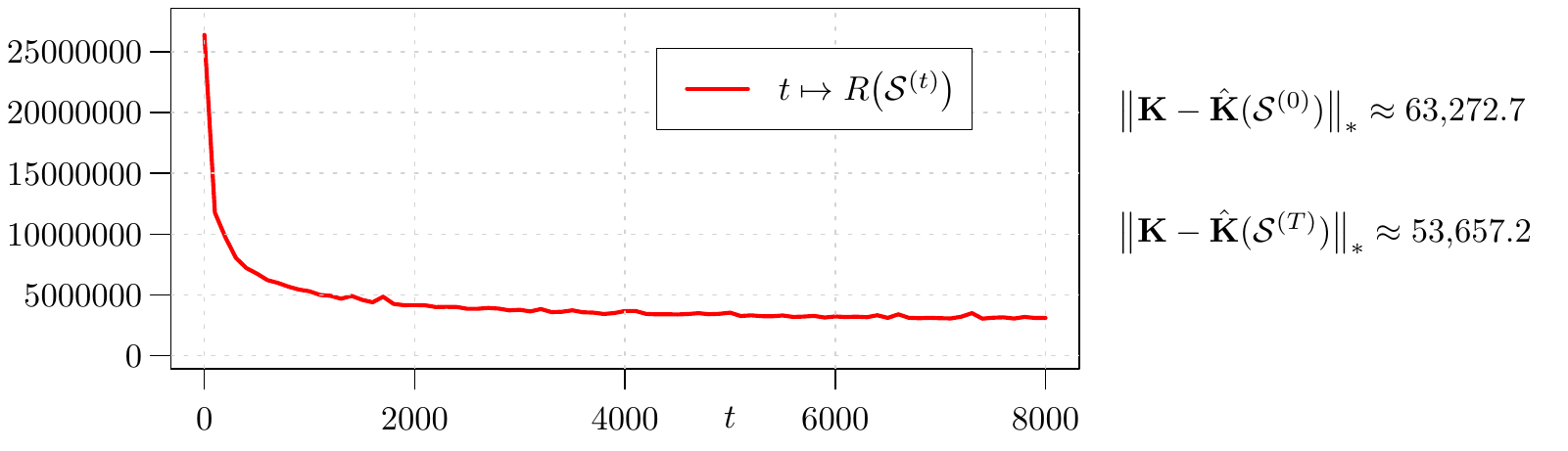}
\end{center}
\caption{For the MiniBooNE data set, decay of the radial SKD during the optimisation of a random initial Nystr\"om sample of size $n=1{,}000$. The SGD is based on i.i.d. sampling with batch size $b=200$ and stepsize $\gamma=2\times 10^{-7}$, and the descent is stopped after $T=8{,}000$ iterations; the cost is evaluated every $100$ iterations. }
\label{fig:miniboone}
\end{figure}

We consider a random initial Nystr\"om sample of size $n=1{,}000$, and optimise it through SGD with i.i.d. sampling, batch size $b=200$, stepsize $\gamma=2\times 10^{-7}$; the descent is stopped after $T=8{,}000$ iterations. The resulting decay of the radial SKD is presented in Figure~\ref{fig:miniboone} (the cost is evaluated every $100$ iterations), and the trace norm of the Nystr\"om approximation error for the initial and locally optimised samples are reported. In terms of computation time, on our machine (endowed with an 3.5 GHz Dual-Core Intel Core i7 processor, and using a single-threaded C implementation interfaced with R), for $n=1{,}000$, an evaluation of the radial SKD (up to the constant $\|\BfK\|_{\Frob}^{2}$) takes $6.8~\seco$, while an evaluation of the term $\|\BfK-\hat{\BfK}(S)\|_{*}$ takes $6{,}600~\seco$; performing the optimisation reported in Figure~\ref{fig:miniboone} without checking the decay of the cost takes $1{,}350~\seco$.
This experiment illustrates the ability of the considered framework to tackle relatively large problems. 

\section{Conclusion}
\label{sec:conclusion}

We demonstrated the relevance of the radial-SKD-based framework for the local optimisation, through SGD, of Nystr\"om samples for SPSD kernel-matrix approximation. We studied the Lipschitz continuity of the underlying gradient and discussed its stochastic approximation. We performed numerical experiments illustrating that local optimisation of the radial SKD yields significant improvement of the Nystr\"om approximation in terms of trace, Frobenius and spectral norms.

In our experiments, we implemented SGD with i.i.d. sampling, fixed stepsize and fixed number of iterations; although already bringing satisfactory results, to improve the time efficiency of the approach, the optimisation strategy could be accelerated by considering for instance adaptive stepsize, parallelisation or momentum-type techniques (see \cite{ruder2016overview} for an overview). The initial Nystr\"om samples $\CS^{(0)}$ we considered were draw uniformly at random without replacement; while our experiments suggest that the local minima of the radial SKD often induce approximations of comparable quality, the use of more efficient initialisation strategies may be investigated (see e.g. \cite{niederreiter1992random, kumar2012sampling, wang2016towards, cai2018smash, derezinski2020improved}).

As a side note, when considering the trace norm, the Nystr\"om sampling problem is intrinsically related to the
\emph{integrated-mean-squared-error} design criterion in kernel regression (see e.g. \cite{rasmussen2006gaussian, Gauthier2017IMSE, santner2018design}); consequently the approach considered in this paper may be used for the design of experiments for such models.

\appendix
\section*{Appendix}

\begin{proof}[Proof of Theorem \ref{thm:lipschitz}]
We consider a Nystr\"om sample $\CS \in \SX^{n}$ and introduce 
\begin{equation} \label{eq:c-S}
	c_{\CS} = \frac{1}{\| \BfK_{\CS} \|_{\Frob}^{2}} \sum_{i=1}^{N} \sum_{j=1}^{n} K^{2}(x_{i}, s_{j}). 
\end{equation}
In view of \eqref{eq:radial-SKD-true-partial-pos}, the partial derivative of $R$ at $\CS$ with respect to the $l$-th coordinate of the $k$-th landmark point $s_{k}$ can be written as
\begin{equation} \label{eq:radial-SKD-true-partial-pos-c-S}
\partial_{[s_{k}]_{l}} R(\CS) = c_{\CS}^{2} \bigg( \partial_{[s_{k}]_{l}}^{[\rmd]} K^{2}(s_{k}, s_{k}) + 2 \sum_{\substack{j=1, \\ j \neq k}}^{n} \partial_{[s_{k}]_{l}}^{[\rml]} K^{2}(s_{k}, s_{j}) \bigg) - 2 c_{\CS} \sum_{i=1}^{N} \partial_{[s_{k}]_{l}}^{[\rml]} K^{2}(s_{k}, x_{i}).
\end{equation}
For $k$ and $k' \in \{1, \cdots, n\}$ with $k \neq k'$, and for $l$ and $l' \in \{1, \cdots, d\}$, the second-order partial derivatives of $R$ at $\CS$, with respect to the coordinates of the landmark points in $\CS$, verify
\begin{align}
\begin{split} \label{eq:radial-SKD-partial-2-same}
	\partial_{[s_{k}]_{l}} \partial_{[s_{k}]_{l'}} R(\CS)
		&= c_{\CS}^{2} \partial_{[s_{k}]_{l}}^{[\rmd]} \partial_{[s_{k}]_{l'}}^{[\rmd]} K^{2}(s_{k}, s_{k}) + 2 c_{\CS} (\partial_{[s_{k}]_{l'}} c_{\CS}) \partial_{[s_{k}]_{l}}^{[\rmd]} K^{2}(s_{k}, s_{k}) \\
		&\quad + 2 c_{\CS}^{2} \sum_{\substack{j=1, \\ j \neq k}}^{n} \partial_{[s_{k}]_{l}}^{[\rml]} \partial_{[s_{k}]_{l'}}^{[\rml]} K^{2}(s_{k}, s_{j}) + 4 c_{\CS} (\partial_{[s_{k}]_{l'}} c_{\CS}) \sum_{\substack{j=1, \\ j \neq k}}^{n} \partial_{[s_{k}]_{l}}^{[\rml]} K^{2}(s_{k}, s_{j}) \\
		&\quad - 2 c_{\CS} \sum_{i=1}^{N} \partial_{[s_{k}]_{l}}^{[\rml]} \partial_{[s_{k}]_{l'}}^{[\rml]} K^{2}(s_{k}, x_{i}) - 2 (\partial_{[s_{k}]_{l'}} c_{\CS}) \sum_{i=1}^{N} \partial_{[s_{k}]_{l}}^{[\rml]} K^{2}(s_{k}, x_{i}), \text{ and}
\end{split}
\end{align}
\begin{align}
\begin{split} \label{eq:radial-SKD-partial-2-diff}
	\partial_{[s_{k}]_{l}} \partial_{[s_{k'}]_{l'}}& R(\CS)
		= 2 c_{\CS} (\partial_{[s_{k'}]_{l'}} c_{\CS}) \partial_{[s_{k}]_{l}}^{[\rmd]} K^{2}(s_{k}, s_{k}) + 2 c_{\CS}^{2} \partial_{[s_{k}]_{l}}^{[\rml]} \partial_{[s_{k'}]_{l'}}^{[\rmr]} K^{2}(s_{k}, s_{k'}) \\ 
		&\quad + 4 c_{\CS} (\partial_{[s_{k'}]_{l'}} c_{\CS}) \sum_{\substack{j=1, \\ j \neq k}}^{n} \partial_{[s_{k}]_{l}}^{[\rml]} K^{2}(s_{k}, s_{j}) - 2(\partial_{[s_{k'}]_{l'}} c_{\CS}) \sum_{i=1}^{N} \partial_{[s_{k}]_{l}}^{[\rml]} K^{2}(s_{k}, x_{i}), 
\end{split}
\end{align}
where the partial derivative of $c_{\CS}$ with respect to the $l$-th coordinate of the $k$-th landmark point $s_{k}$ is given by
\begin{equation} \label{eq:c-S-partial}
	\partial_{[s_{k}]_{l}} c_{\CS} = \frac{1}{\| \BfK_{\CS} \|_{\Frob}^{2}} \bigg(\sum_{i=1}^{N} \partial_{[s_{k}]_{l}}^{[\rml]} K^{2}(s_{k}, x_{i}) - c_{\CS} \partial_{[s_{k}]_{l}}^{[\rmd]} K^{2}(s_{k}, s_{k}) - 2 c_{\CS} \sum_{\substack{j = 1, \\ j \neq k}}^{n} \partial_{[s_{k}]_{l}}^{[\rml]} K^{2}(s_{k}, s_{j})\bigg).
\end{equation}
From \ref{itm:sq-kernel-diag-lower-bound}, we have 
\begin{equation} \label{eq:frob-lower-bound}
	\| \BfK_{\CS} \|_{\Frob}^{2} = \sum_{i=1}^{n} \sum_{j=1}^{n} K^{2}(s_{i}, s_{j}) \geqslant \sum_{i=1}^{n} K^{2}(s_{i}, s_{i}) \geqslant n \alpha.
\end{equation}

By the Schur product theorem, the squared kernel $K^{2}$ is SPSD;  
we denote by $\CG$ the RKHS of real-valued functions on $\SX$ for which $K^{2}$ is reproducing. For $x$ and $y\in\SX$, we have $K^{2}(x, y) = \langle k_{x}^{2}, k_{y}^{2} \rangle_{\CG}$, with $\langle \cdot, \cdot \rangle_{\CG}$ the inner product on $\CG$, and where $k_{x}^{2}\in\CG$ is such that $k_{x}^{2}(t)=K^{2}(t,x)$, for all $t\in\SX$. From the Cauchy-Schwartz inequality, we have
\begin{align}
\sum_{i=1}^{N} \sum_{j=1}^{n} K^{2}(s_{j}, x_{i})
&= \sum_{i=1}^{N} \sum_{j=1}^{n} \langle k_{s_{j}}^{2}, k_{x_{i}}^{2} \rangle_{\CG}
= \bigg\langle \sum_{j=1}^{n} k_{s_{j}}^{2}, \sum_{i=1}^{N} k_{x_{i}}^{2} \bigg\rangle_{\CG} \nonumber \\
&\leqslant \bigg\| \sum_{j=1}^{n} k_{s_{j}}^{2} \bigg\|_{\CG} \bigg\| \sum_{i=1}^{N} k_{x_{i}}^{2} \bigg\|_{\CG}
= \| \BfK_{\CS} \|_{\Frob} \| \BfK \|_{\Frob}. \label{eq:frob-CS}
\end{align}
By combining \eqref{eq:c-S} with inequalities \eqref{eq:frob-lower-bound} and \eqref{eq:frob-CS}, we obtain
\begin{equation} \label{eq:c-S-bound}
	0 \leqslant c_{\CS} \leqslant \frac{\| \BfK \|_{\Frob}}{\| \BfK_{\CS} \|_{\Frob}} \leqslant \frac{\| \BfK \|_{\Frob}}{\sqrt{n \alpha}} = C_{0}.
\end{equation}
Let $k \in \{1, \ldots, n\}$ and let $l \in \{1, \ldots, d\}$. From equation \eqref{eq:c-S-partial}, and using inequalities \eqref{eq:frob-lower-bound} and \eqref{eq:c-S-bound} together with \ref{itm:sq-kernel-partial-bound}, we obtain
\begin{equation} \label{eq:c-S-partial-bound}
	|\partial_{[s_{k}]_{l}} c_{\CS}| \leqslant \frac{M_{1}}{n \alpha} [N + (2n - 1)C_{0}] = C_{1}.
\end{equation}
In addition, let $k' \in \{1, \ldots, n\} \setminus \{k\}$ and $l' \in \{1, \ldots, d\}$; from equations \eqref{eq:radial-SKD-partial-2-same}, \eqref{eq:radial-SKD-partial-2-diff}, \eqref{eq:c-S-bound} and \eqref{eq:c-S-partial-bound}, and conditions \ref{itm:sq-kernel-partial-bound} and \ref{itm:sq-kernel-partial-2-bound}, we get
\begin{align}
&|\partial_{[s_{k}]_{l}} \partial_{[s_{k}]_{l'}} R(\CS)| \nonumber \\
&\quad \leqslant C_{0}^{2} M_{2} + 2 C_{0} C_{1} M_{1} + 2(n - 1)C_{0}^{2} M_{2} + 4(n - 1)C_{0} C_{1} M_{1} + 2 C_{0} M_{2} N + 2 C_{1} M_{1} N \nonumber \\
&\quad = (2n - 1)C_{0}^{2} M_{2} + (4n - 2)C_{0} C_{1} M_{1} + 2N(C_{0} M_{2} + C_{1} M_{1}), \label{eq:radial-SKD-partial-2-same-bound}
\end{align}
and
\begin{align}
|\partial_{[s_{k}]_{l}} \partial_{[s_{k'}]_{l'}} R(\CS)|
&\leqslant 2 C_{0} C_{1} M_{1} + 2 C_{0}^{2} M_{2} + 4(n - 1)C_{0} C_{1} M_{1} + 2 C_{1} M_{1} N \nonumber \\
&= 2 C_{0}^{2} M_{2} + (4n - 2)C_{0} C_{1} M_{1} + 2 N C_{1} M_{1}. \label{eq:radial-SKD-partial-2-diff-bound}
\end{align}
For $k, k' \in \{1, \ldots, n\}$, we denote by $\BfB^{k, k'}$ the $d \times d$ matrix with $l, l'$ entry given by \eqref{eq:radial-SKD-partial-2-same} if $k=k'$, and by \eqref{eq:radial-SKD-partial-2-diff} otherwise. The Hessian $\nabla^{2} R(\CS)$ can then be represented as a block-matrix, that is

\begin{equation*}
\nabla^{2} R(\CS) =
\begin{bmatrix}
\BfB^{1, 1}	& \cdots	& \BfB^{1, n} \\
\vdots		& \ddots	& \vdots \\
\BfB^{n, 1} & \cdots	& \BfB^{n, n}
\end{bmatrix}
\in \BR^{nd\times nd}. 
\end{equation*}
The $d^{2}$ entries of  the $n$ diagonal blocks of $\nabla^{2} R(\CS)$ are of the form \eqref{eq:radial-SKD-partial-2-same}, and the $d^{2}$ entries of the $n(n-1)$ off-diagonal blocks of $\nabla^{2} R(\CS)$ are the form \eqref{eq:radial-SKD-partial-2-diff}. From inequalities \eqref{eq:radial-SKD-partial-2-same-bound} and \eqref{eq:radial-SKD-partial-2-diff-bound}, we obtain
\begin{equation*}
\| \nabla^{2} R(\CS) \|_{2}^{2}
\leqslant \| \nabla^{2} R(\CS) \|_{\Frob}^{2}
= \sum_{k=1}^{n} \sum_{l=1}^{d} \sum_{l'=1}^{d} [\BfB^{k, k}]_{l, l'}^{2} + \sum_{k=1}^{n} \sum_{\substack{k'=1, \\ k' \neq k}}^{n} \sum_{l=1}^{d} \sum_{l'=1}^{d} [\BfB^{k, k'}]_{l, l'}^{2}
\leqslant L^{2}, 
\end{equation*}
with 
\begin{align*}
\begin{split}
L&=\big(nd^{2}[(2n - 1)C_{0}^{2} M_{2} + (4n - 2)C_{0} C_{1} M_{1} + 2N(C_{0} M_{2} + C_{1} M_{1})]^{2} \\
&\quad \phantom{x} + 4n(n - 1)d^{2} [C_{0}^{2} M_{2} + (2n - 1)C_{0} C_{1} M_{1} + N C_{1} M_{1}]^{2}\big)^{\frac{1}{2}}. 
\end{split}
\end{align*}
For all $\CS\in\SX^{n}$, the constant $L$ is an upper bound for the spectral norm of the Hessian matrix $\nabla^{2} R(\CS)$, so the gradient of $R$ is Lipschitz continuous over $\SX^{n}$, with Lipschitz constant $L$.
\end{proof}

\bibliographystyle{plain}
\bibliography{Radial_Flow}

\end{document}